\newtheorem{theorem}{\textbf{Theorem}}
\newtheorem{lemma}{\textbf{Lemma}}
\newtheorem{corollary}{\textbf{Corollary}}
\title{\LARGE \bf
NAUTS: Negotiation for Adaptation to Unstructured Terrain Surfaces
}
\author{Sriram Siva$^{1}$, Maggie Wigness$^{2}$, John G. Rogers$^{2}$, Long Quang$^{2}$, and Hao Zhang$^{1,3}$
\thanks{*This work was partially supported by NSF CAREER Award IIS-1942056 and U.S. DEVCOM
ARL SARA Program W911NF-20-2-0107.}
\thanks{$^{1}$Sriram Siva is with the Computer Science Department, Colorado School of Mines (Mines), Golden, CO 80401, USA. Hao Zhang is partially affiliated with Mines.         {Email: sivasriram@mines.edu}. }%
\thanks{$^{2}$Maggie Wigness, John G. Rogers, and Long Quang are with the DEVCOM Army Research Laboratory (ARL), Adelphi, MD 20783, USA.  {Email: \{maggie.b.wigness.civ, john.g.rogers59.civ, long.p.quang.civ\}@army.mil}.
}
\thanks{$^{3}$Hao Zhang is with the Manning College of Information and Computer Sciences (CICS), 
University of Massachusetts Amherst, Amherst, MA 01002, USA.
        {Email: hao.zhang@cs.umass.edu}.}
}
\begin{document}

\maketitle
\thispagestyle{empty}
\pagestyle{empty}

\begin{abstract}
When robots operate in real-world off-road environments with unstructured  terrains, the ability to adapt their navigational policy is critical for effective and safe navigation. 
However, off-road terrains introduce several challenges to robot navigation, including dynamic obstacles and terrain uncertainty, leading to inefficient traversal or navigation failures. 
To address these challenges, we introduce a novel approach for adaptation by negotiation that enables a ground robot to adjust its navigational behaviors through a negotiation process. Our approach first learns prediction models for various navigational policies to function as a terrain-aware joint local controller and planner. 
Then, through a new negotiation process, our approach learns from various policies' interactions with the environment to agree on the optimal combination of policies in an online fashion to adapt robot navigation to unstructured off-road terrains on the fly. 
Additionally, we implement a new optimization algorithm that offers the optimal solution for robot negotiation in real-time during execution. 
Experimental results have validated that our method for adaptation by negotiation 
outperforms previous methods for robot navigation, especially over unseen and uncertain dynamic terrains. 
\color{black}
\end{abstract}

\section{Introduction}

In recent years, autonomous mobile robots have been increasingly deployed in off-road field environments to carry out tasks related to disaster response, infrastructure inspection, and subterranean and planetary exploration \cite{lattanzi2017review,schuster2019towards,chiang2020safety}. When operating in such environments, mobile robots encounter dynamic, unstructured terrains that offer a wide variety of challenges (as seen in Fig. \ref{motivation}), including dynamic obstacles and varying terrain characteristics like slope and softness. In these environments, terrain adaptation is an essential capability that allows ground robots to perform successful maneuvers by adjusting their navigational behaviors 
to best traverse
the changing unstructured off-road terrain characteristics \cite{sartoretti2018central,siva2019robot}.

Given its importance, the problem of robot adaptation over unstructured terrains has been extensively investigated in recent years.
In general, terrain adaptation has been addressed using three broad categories of methods.
The first category, classic control-based methods, use mathematical tools from control theory \cite{kavraki1996probabilistic,williams2016aggressive,moysis2020chaotic} to design robot models that achieve the desired robot behavior and perform robust ground maneuvers in various environments.
The second category, learning-based methods, use data-driven formulations
to either imitate an expert demonstrator \cite{siva2019robot,kahn2021land,kahn2021badgr}, learn from trial-and-error in a reinforcement learning setting \cite{kahn2018self,han2018sensor,kumar2021rma}, or use 
online learning 
to continuously learn and adapt in an environment 
\cite{liu2021lifelong, zenke2017continual, kahn2018composable}.
Finally, the third category, machine-learning-based control, exploits the advantage of integrating machine learning into control theory to learn accurate robot dynamics and accordingly adapt navigational behaviors
\cite{jhang2018navigation, sinha2020neural, sinha2020formulazero}.

\begin{figure}[t]
\centering
\includegraphics[width=0.49\textwidth]{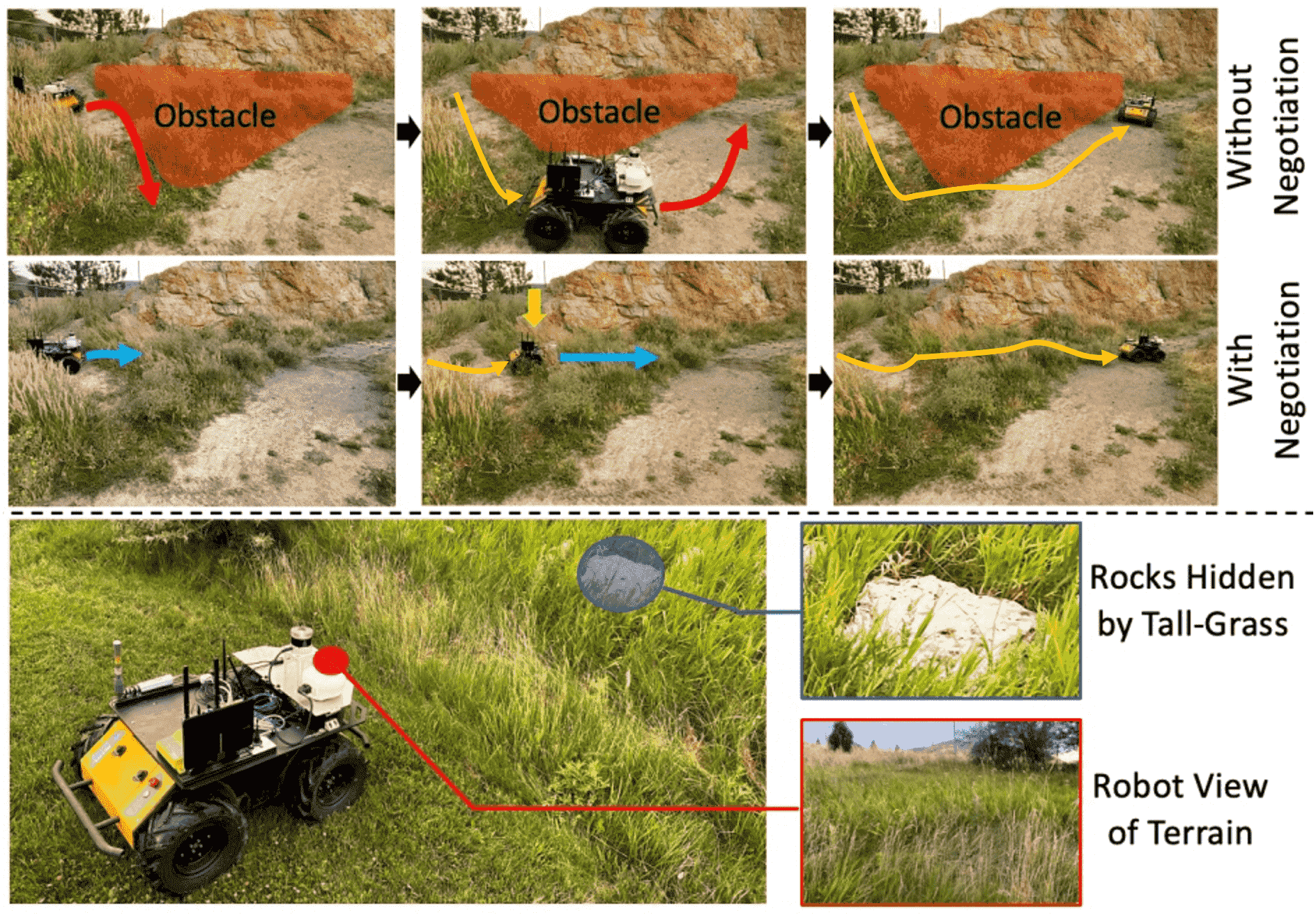}
\vspace{-11pt}
\caption{Robots operating in dynamic, unstructured environments often generate sub-optimal behaviors leading to inefficient robot traversal or even navigation failure. For example, robots may consider tall grass terrain as an obstacle.
Terrain negotiation allows robots to explore different navigation policies 
to determine the optimal combination for successful and efficient navigation in unknown terrains. 
In this example, the robot initially treats tall grass as an obstacle but simultaneously explores a max speed policy. The robot then quickly observes that the max speed policy improves efficiency by traversing across tall grass, and thus, learns to give more importance to the max speed policy compared to obstacle avoidance.}\label{motivation}
\end{figure}

However, unstructured terrains often have dynamic obstacles that change their state as the robot traverses over them, such as tall grass. 
Additionally, these terrains can occlude future obstacles and ground cover, leading to traversal uncertainty (e.g., grass occluding a rock as seen in Fig. \ref{motivation}).
These challenges can also be observed in commonly traversed unstructured environments such as sand, snow, 
mud, and forest terrains.
As characteristics of such terrains cannot be modeled beforehand, robots cannot be trained for all possible terrain variations and 
must therefore
adapt as these variations are encountered.
Existing methods for robot navigation generally
lack robustness to address these challenges as they are designed as a local controller to execute a single robot navigation policy, causing inefficient (e.g., longer traversal time and distance) or even failed navigation. 
In addition, current methods such as \cite{kahn2021land,kahn2021badgr} require significant amounts of training data to learn optimal navigational behaviors. 
The challenge of quickly learning a joint local controller and planner to enable adaptive behaviors has not been addressed. 

\color{blue}

\color{black}
In this paper, we introduce our novel approach to robot navigation: \emph{Negotiation for Adaptation to Unstructured Terrain Surfaces} (NAUTS).
Instead of generating terrain-aware behaviors for only the current time steps, NAUTS learns a non-linear prediction model to estimate future robot behaviors and 
states for several different policies. 
Each policy represents a series of navigational behaviors that can be learned either using imitation learning \cite{siva2019robot} or self-supervised learning \cite{kahn2021badgr} according to a specific goal (e.g., obstacle avoidance, maximum speed, etc.). 
NAUTS then learns from
the continuous interaction of these different policies 
with the terrain to generate optimal behaviors for successful and efficient navigation.
We define \emph{negotiation} as the process of learning robot navigation behaviors from online interactions between a library of policies with the terrain in order to agree on an optimal combination of these policies. 
The learning of both the non-linear prediction models and policy negotiation are integrated into a unified mathematical
formulation under 
a regularized optimization paradigm.

There are three main contributions of this paper:
\begin{itemize}
     \item We introduce a novel non-linear prediction model to estimate goal-driven future robot behaviors and states according to various navigational policies and address the challenge of learning a terrain-aware joint local controller and planner. 
     \item We propose one of the first formulations on negotiation for robot adaptation under a regularized optimization framework. Our approach allows a robot to continuously form agreements between various navigational policies and optimally combines them to i) improve the efficiency of navigation in known environments and ii) learn new navigation policies quickly in unknown and uncertain environments.
     \item We design a new optimization algorithm that allows for fast, real-time convergence to execute robot negotiation during deployment.
 \end{itemize}

As an experimental contribution, we provide a comprehensive performance evaluation of learning-based navigation methods over challenging dynamic unstructured terrains.


\section{Related Work}\label{sec:relatedwork}
The related research in robot terrain adaptation can be classified under methods based on classical control theory, learning-based, and machine-learning-based control.

The methods developed under the classical control theory use pre-defined models to generate robust navigational behaviors and reach the desired goal position in an outdoor field environment.
Earlier methods used a fuzzy logic implementation to perform navigation \cite{saffiotti1997uses, wang2008fuzzy}, without using the knowledge of a robot's dynamics.
This led to the development of system identification \cite{rabiner1978fir}, where methods learn robot dynamics using transfer functions to model linear robotic systems and perform navigation \cite{bolea2003non, pebrianti2018motion}. More recently, trajectory optimization models such as differential dynamic programming (DDP), specifically iterative linear quadratic regulator (iLQR), used knowledge of non-linear robot dynamics to solve navigation tasks  \cite{van2017motion,zhang2012iterative}.
Model predictive control (MPC) learns to be robust to robot model errors and terrain noise by implementing a closed-loop feedback system during terrain navigation \cite{howard2010receding,hafez2019integrity,tahirovic2010general}.  
However, these methods can approximate robot dynamics to a limited extent as these methods cannot learn from high-dimensional robot data and lack the ability to adapt as terrain changes.

Learning-based methods use data-driven formulations to generate navigational behaviors in various environments. 
Early methods used Koopman operator theory \cite{koopman1931hamiltonian} to model non-linear robot systems using an infinite-dimensional robot observable space \cite{proctor2018generalizing, williams2015data} to perform terrain navigation.  
Subsequent learning-based methods mainly used learning from demonstration (LfD) \cite{atkeson1997robot} to transfer human expertise of robot driving to mobile robots \cite{kahn2021land, wigness2018robot}. 
One method to perform terrain-aware navigation combined representation learning for terrain classification with apprenticeship learning to perform terrain adaptation \cite{siva2019robot}. 
Kahn and Levine \cite{kahn2021badgr} learned navigational affordance from experts over various terrains for carrying out off-road navigation.
Recently, consistent behavior generation was achieved \cite{siva2021enhancing} to match actuation behaviors with a robot's expected behaviors. 
Unlike learning from demonstration, reinforcement learning based methods learn purely from a robot's own experience in an unknown environment in a trial-and-error fashion \cite{kahn2018self,han2018sensor}. Rapid motor adaptation was achieved by updating learned policies via inferring key environmental parameters to successfully adapt in various terrains \cite{kumar2021rma}.
Life-long learning methods, similar to reinforcement learning, sequentially improve the performance of robot navigation by continuously optimizing learned models \cite{kahn2018composable,wang2021appli}.
Rather than just learning a robot model, learning-based methods also learn robot interactions with the terrain, thus being terrain-aware.  
However, these methods fail in unstructured environments \cite{nampoothiri2021recent} as they cannot adapt on the fly with the terrain or 
exhibit
catastrophic forgetting \cite{serra2018overcoming}, which is the tendency to forget previously learned data upon learning from new data.

\begin{figure*}[th]
\centering
\includegraphics[width=0.98\textwidth]{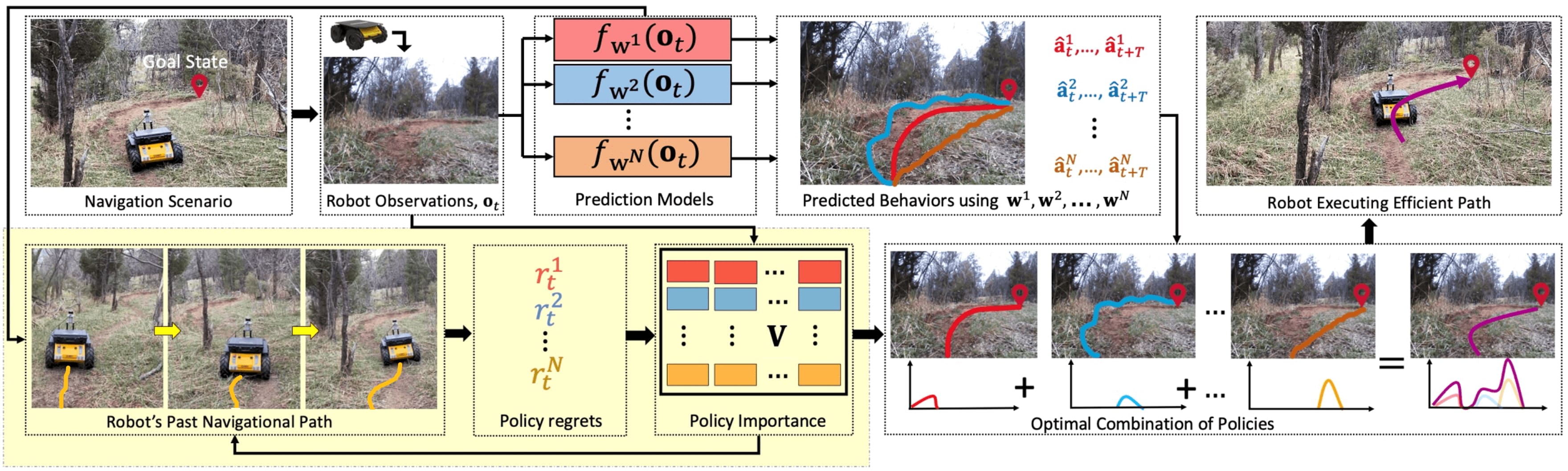}
\vspace{-3pt}
\caption{Overview of our proposed NAUTS approach for robot negotiation to adapt over unstructured terrains. Illustrated is the learning performed by our approach during the training phase. The module in the yellow box illustrates robot negotiation during the execution stage.
}\label{approach}
\vspace{-6pt}
\end{figure*}

Machine-learning-based control methods learn robot behaviors by combining data-driven formulations into predefined robot models \cite{duriez2017machine,brunton2019data}.
Early methods used Dynamics Mode Decomposition (DMD) \cite{schmid2010dynamic} and Sparse Identification of Non-Linear Dynamics (SINDy) \cite{mamakoukas2019local} to learn data-driven models based on system identification and performed terrain navigation \cite{wang2021real,kutz2016dynamic}.
Later, evolutionary algorithms were developed to optimize parameters of a robot model in an online learning fashion for robust navigation \cite{caceres2017approach,ramirez1999nonlinear}. For robots with multiple degrees of freedom, methods were developed that use a combination of iterative Linear Quadratic Regulators (iLQR) and machine learning search to explore multiple robot configurations and plan self-adaptive navigation \cite{gillespie2018learning}. Similar approaches were designed using a neural network based functional approximator to learn a robot dynamics model and adapt this model with online learning \cite{nagariya2020iterative}. Robust path planning was performed for safe navigation of autonomous vehicles under perception uncertainty \cite{alharbi2020global}. 
However, these methods do not address adaptation to previously unseen, unstructured terrains, and 
are unable
to address the dynamic nature of the terrain, which often leads to ineffective terrain traversal.

\section{Approach}\label{sec:Approach}

In this section, we discuss our proposed method, NAUTS, for robot traversal adaptation by negotiation. An overview of the approach is illustrated in Fig. \ref{approach}.

\subsection{Learning Policy Prediction Models}

Our approach first learns a non-linear prediction model to estimate future robot states and behaviors 
for each policy in a previously trained library. Navigational policies describe various goals of navigation, e.g., obstacle avoidance, adaptive maneuvers or max speed.
This model enables our approach to predict how a policy works
without the requirement of knowing its implementation (i.e., the policy can be treated as a black
box). 
Formally, at time $t$, we denote the robot terrain observations (e.g., RGB images) input to the $i$-th
policy as $\mathbf{o}^{i}_{t}\in \mathbb{R}^{q}$, where $q$ is the dimensionality of the terrain observations. The robot behavior controls, i.e,  navigational behaviors (e.g., linear and angular velocity), and states (e.g., robot's body pose and position) output from the policy are denoted as $\mathbf{a}^{i}_{t} \in \mathbb{R}^{c}$ and $\mathbf{s}^{i}_{t} \in \mathbb{R}^{m}$, with $c$ and $m$ denote the dimensionality of robot behaviors and states respectively.
Then the $i$-th policy can be represented as $\pi^{i}: (\mathbf{s}^{i}_{t},\mathbf{o}^{i}_{t}) \rightarrow \mathbf{a}_{t}^{i}$.

\begin{figure}[b]
\vspace{-9pt}
\centering
\includegraphics[width=0.42\textwidth]{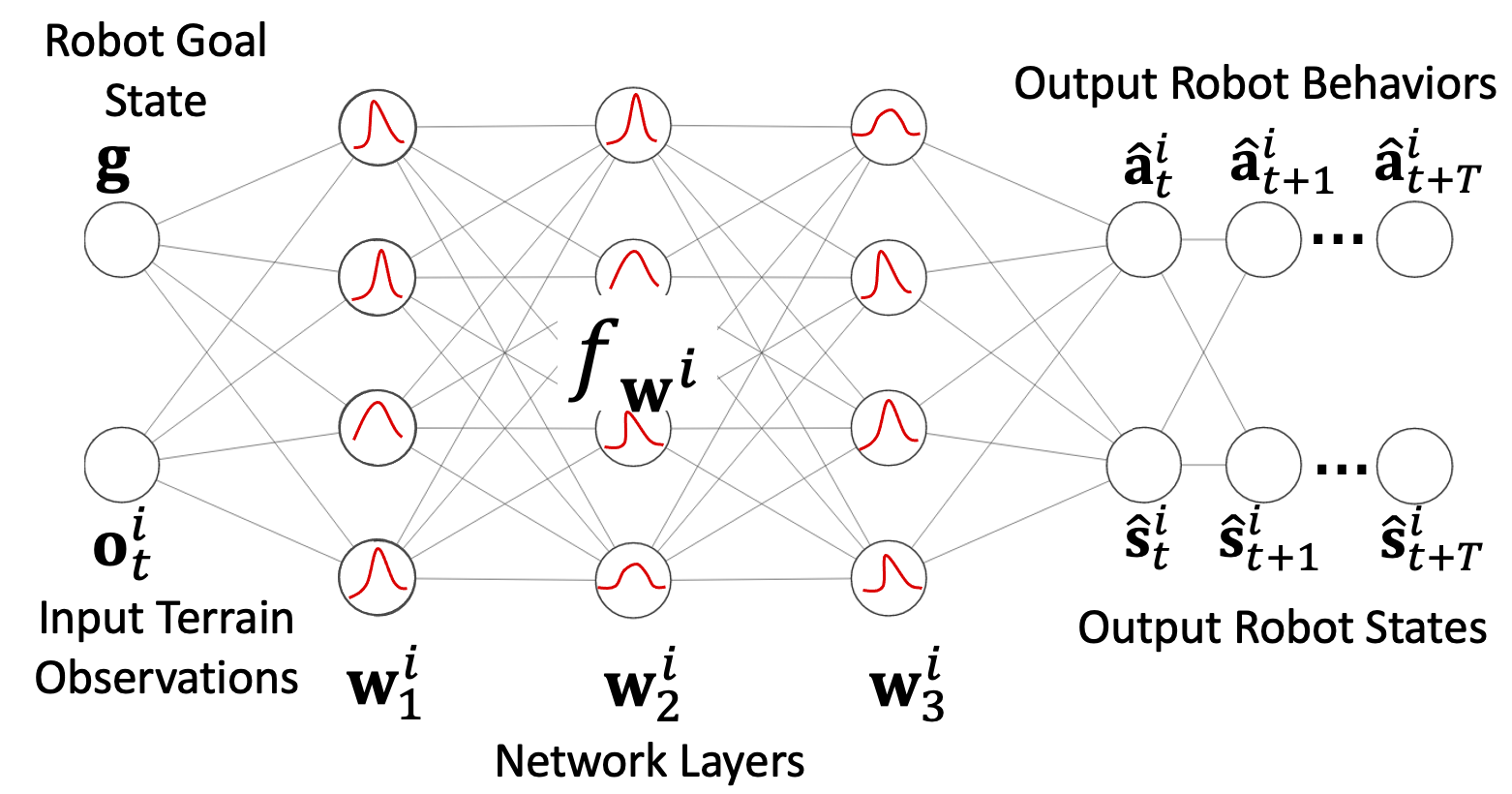}
\vspace{-3pt}
\caption{A shallow GP is designed to implement our prediction model $f_{\mathbf{w}^{i}}$.
}\label{SGP}
\end{figure}

Let $\mathbf{g}$ denote the relative goal state (with respect to $\mathbf{s}_{t}^{i}$) that the robot needs to reach at time $t+T$. 
For every policy $\pi^{i}$, we propose to learn a prediction model $f_{\mathbf{w}^{i}}:(\mathbf{o}_{t}^{i},\mathbf{g}) \rightarrow (\mathbf{\hat{a}}^{i}_{t:t+T},\mathbf{\hat{s}}^{i}_{t:t+T})$ that predicts a sequence of goal driven $T$-future robot behaviors $\mathbf{\hat{a}}^{i}_{t:t+T}$ and states $\mathbf{\hat{s}}^{i}_{t:t+T}$. 
The prediction model estimates behaviors for the present time and functions like a local controller, and by estimating robot behaviors and states for future $T$-steps, it functions as a local planner.
We introduce a shallow Gaussian Process (GP) \cite{garriga2018deep} to implement $f_{\mathbf{w}^{i}}$ that is parameterized by $\mathbf{w}^{i}$, as shown in Fig. \ref{SGP}. 
This shallow Gaussian Process with a recursive kernel has been shown in \cite{garriga2018deep} to be equivalent to, but more data-efficient than, a deep Bayesian CNN with infinitely many filters.
In addition, as this Gaussian Process assumes that each weight of the network is a distribution instead of 
scalar
values, it allows for uncertainty modeling and thus, is robust to environmental variations. We then learn the prediction model $f_{\mathbf{w}^{i}}$ by solving the following regularized optimization problem:
\begin{align}\label{eqn1}
\min_{\mathbf{w}^{i}}  \:\:\:
&\lambda_{1}  \mathcal{L} \big(( \pi^{i}(\mathbf{s}^{i}_{t:t+T},\mathbf{o}^{i}_{t:t+T}), \mathbf{s}^{i}_{t:t+T}),f_{\mathbf{w}^{i}}(\mathbf{o}^{i}_{t},\mathbf{g})\big) \nonumber \\ 
& \:\: + \lambda_{2} \Vert \mathbf{g}^{i}-(\mathbf{\hat{s}}^{i}_{t+T}-\mathbf{\hat{s}}^{i}_{t}) \Vert_{2}^{2}
\end{align}
where $\mathcal{L}(\cdot)$ is the cross-entropy loss \cite{zhang2018generalized}, mathematically expressed as  $\mathcal{L}(p,q) = -\mathbb{E}_{p}[\log(q)]$. This loss helps the prediction model to be insensitive to noisy observations in unstructured environments due to the logarithmic scale. The first part of Eq. (\ref{eqn1}) models the error of predicting $T$-future robot behaviors and states from actual navigational behaviors and states. The second part of Eq. (\ref{eqn1}) models the error of the robot failing to reach its relative goal state. The hyper-parameters $\lambda_{1}$ and $\lambda_{2}$ model the trade-off between the losses.

Following Eq. ({\ref{eqn1}}), the robot learns prediction models for $N$-different policies. 
However, when navigating over unstructured terrains, a single policy may not always prove to be effective for all scenarios.
For example, the policy of obstacle avoidance may lead to longer traversal time in grass terrain, and the policy of max speed may cause collisions with occluded obstacles. 

\subsection{Robot Negotiation for Terrain Adaptation}
The key novelty of NAUTS is its capability of negotiating between different policies to perform successful and efficient navigation, especially in unstructured off-road terrains.
Given $N$-policies in the library, NAUTS formulates robot adaptation by negotiation under the mathematical framework of multi-arm bandit (MAB) optimization \cite{chan2019assistive}. 
MAB comes from the hypothetical experiment where the robot must choose between multiple policies, each of which has an unknown regret with the goal of determining the best (or least regretted) outcome on the fly. We define regret, $r^{i}_{t} : (\mathbf{o}^{i}_{t-T},\mathbf{g})\big) \rightarrow \mathbb{R}^{+}$, of the $i$-th policy at time $t$ as the error of not reaching i) the goal position and ii) the goal position in minimum time and effort. We calculate the regret for each policy as:
\begin{align}\label{eqn2}
r^{i}_{t} = \Big(\frac{\Vert \mathbf{g}\Vert_{2}\Vert \mathbf{\hat{s}}^{i}_{t}\Vert_{2}}{(\mathbf{g})^\top( \mathbf{\hat{s}}^{i}_{t})}-1\Big) 
+\sum_{k=t-T}^{t}(t-k) (\mathbf{\hat{a}}^{i}_{k})^{\top}\mathbf{\hat{a}}^{i}_{k}
\end{align}
where the first part of Eq. (\ref{eqn2}) models the error of not reaching the goal position,
with zero regret if the robot reached its goal position. This error grows exponentially if the robot has deviated from the goal position. 
The second part of Eq. (\ref{eqn2}) models the error of not reaching the goal in minimum time and effort. 
Specifically, the regret is smaller when the robot uses fewer values of navigational behaviors to reach the same goal and also if the robot reaches the goal in minimum time due to the scaling term $(t-k)$. 

Unstructured terrain-aware negotiation can be achieved using the best subset of policies that minimize the overall regret in the present terrain as:
\begin{align}\label{eqn2.5}
    &\min_{\mathbf{V}} \quad \lambda_{3}\sum_{i=1}^{N} \mathcal{R}(\mathbf{o}_{t}^{i},{r}^{i}_{t};\mathbf{v}^{i}) + \lambda_{4}\Vert\mathbf{V}\Vert_{E} \\
&\textrm{s.t.} \quad \quad \quad \sum_{i=1}^{N}(\mathbf{o}^{i}_{t})^{\top}\mathbf{v}^{i}=1  \nonumber
\end{align}
where $\mathcal{R}(\cdot)$, parameterized by $\mathbf{v}^{i} \in \mathbb{R}^{q}$,  is the terrain-aware regret of choosing 
policy $\pi^{i}$ in the present terrain and $\mathbf{V}=[\mathbf{v}^{1},\dots,\mathbf{v}^{N}] \in \mathbb{R}^{N \times q}$. Mathematically, $\mathcal{R}(\mathbf{o}_{t}^{i},{r}^{i}_{t};\mathbf{v}^{i}) = \sum_{k=t}^{t+T} \Vert r^{*}_{k} - (\mathbf{o}_{t}^{i})^{\top}\mathbf{v}^{i} {r^{i}_{k}} \Vert^{2}_{2}$, with $r^{*}_{k} = \min r_{k}^{i}; i=1,\dots,N$. The use of a linear model enables real-time convergence for terrain-aware policy negotiation. 
The column sum of $\mathbf{V}$ 
indicates 
the weights of each policy towards minimizing the overall regret of robot navigation. In doing so, the robot recognizes the important policies and exploits these policies to maintain efficient navigation. However, we also need to explore the various policies to improve navigation efficiency or even learn in an unknown environment, which is achieved by the 
regularization term in Eq. (\ref{eqn2.5}), called the exploration norm. Mathematically, $\Vert\mathbf{V} \Vert_{E} = \sum_{i=1}^{N} \frac {\Vert \mathbf{V} \Vert_{F}}{\Vert \mathbf{v}^{i} \Vert_{2}}$, where the operator $\Vert \cdot \Vert_{F}$ is the Frobenius norm
with $\Vert \mathbf{V} \Vert_{F} = \sqrt{\sum_{i=1}^{N}\sum_{j=1}^{q}(v^{i}_{j})^{2}}$.
The exploration norm enables NAUTS to continuously explore all navigational policies in any terrain. Specifically, the exploration norm enables NAUTS to explore 
sub-optimal policies
by ensuring $\mathbf{v}^{i} \neq \mathbf{0}$. If $\mathbf{v}^{i} = \mathbf{0}$, i.e., if the $i$-th policy is given zero importance, then the value of objective in Eq. (\ref{eqn2.5}) would be very high. The hyper-parameters $\lambda_{3}$ and $\lambda_{4}$ control the trade-off between exploration and exploitation during negotiation.
The constraints in Eq. (\ref{eqn2.5}) normalize the various combination of navigational policies.

Integrating prediction model learning and policy negotiation under a unified mathematical framework, robot adaptation by negotiation can be formulated as the following regularized optimization problem:
\begin{align}\label{eqn3}
&\min_{\mathbf{W},\mathbf{V}}  \sum_{i=1}^{N}\Big(
\lambda_{1}  \mathcal{L} \big(( \pi^{i}(\mathbf{s}^{i}_{t:t+T},\mathbf{o}^{i}_{t:t+T}), \mathbf{s}^{i}_{t:t+T}),f_{\mathbf{w}^{i}}(\mathbf{o}^{i}_{t},\mathbf{g})\big) \nonumber \\ 
&+ \lambda_{2} \Vert \mathbf{g}^{i}-(\mathbf{\hat{s}}^{i}_{t+T}-\mathbf{\hat{s}}^{i}_{t}) \Vert_{2}^{2} +  \lambda_{3} \mathcal{R}(\mathbf{o}_{t}^{i},{r}^{i}_{t};\mathbf{v}^{i})\Big) + \lambda_{4}\Vert\mathbf{V}\Vert_{E} \nonumber \\
&\textrm{s.t.} \quad \quad \quad \sum_{i=1}^{N}(\mathbf{o}^{i}_{t})^{\top}\mathbf{v}^{i}=1 
\end{align}
where $\mathbf{W}=[\mathbf{w}^{1},\dots,\mathbf{w}^{N}]$. 
During the training phase, we compute the optimal $\mathbf{W}^{*}$ and $\mathbf{V}^{*}$. 

During execution, we fix $\mathbf{W}^{*}$, meaning the prediction models do not update during execution. However, our approach continuously updates $\mathbf{V}^{*}$ in an online fashion, which allows for negotiation at each step. 
At every time step $t_0$, we acquire observations $\mathbf{o}_{t_{0}}$.
For a given robot goal state $\mathbf{g}$, 
we dynamically choose the best combination of policies as:
\begin{equation}\label{eqn4}
    \mathbf{a}_{t_{0}:t_{0}+T} =  \sum_{i=1}^{N} (\mathbf{o}_{t_0})^{\top}\mathbf{v}^{i*}f_{\mathbf{w}^{i*}}(\mathbf{o}_{t_{0}},\mathbf{g})
\end{equation}
where $\mathbf{a}_{t_{0}}$ is the behavior executed by the robot following policy negotiation at time $t_{0}$ and the behaviors $\mathbf{a}_{t_{0}:t_{0}+T}$ make up the local plan for the robot.

\begin{algorithm}[t]
    \SetKwInOut{Input}{Input}
    \SetKwInOut{Output}{Output}
    \SetKwInOut{return}{return}
\small
    \Input{Policies $\mathbf{W}^{*}$ and Weights $\mathbf{V}^{*} \in \mathbb{R}^{N \times q}$}

    \Output{Optimized Weights for Negotiation $\mathbf{V}^{*} \in \mathbb{R}^{N \times q}$ }

    \While{goal is not reached}
    {
    \For{$i=1,\dots,N$}
    {
        Obtain predicted behavior $\mathbf{\hat{a}}^{i}_{t:t+T}$ and states $\mathbf{\hat{s}}^{i}_{t:t+T}$ from $f_{\mathbf{w}^{i*}}(\mathbf{o}_{t_{0}},\mathbf{g})$;

        Calculate regret of $i$-th policy $r^{i}$ from Eq. (\ref{eqn2});
    }

     Calculate $r^{*}_{t_{0}} = \min r^{i}_{t_{0}}; \:\: i=1,\dots,N$;

    \While{not converge}
    {
     Calculate diagonal matrix $\mathbf{Q}$ with the $i$-th diagonal block given as $\frac{\mathbf{I}}{2\Vert \mathbf{V} \Vert_{E}}$;
     
     Compute the columns of the distribution $\mathbf{V}$ according to Eq. (\ref{optW2});
    }
    }
    \textbf{return: } $\mathbf{V}^{*} \in \mathbb{R}^{N \times q}$
 \caption{Optimization algorithm for solving the robot negotiation problem during execution in Eq. (\ref{eqn2.5}).} 
 \label{alg2}
\end{algorithm}

\subsection{Optimization Algorithm}
During training, we reduce Eq. (\ref{eqn3}) to simultaneously optimize $\mathbf{W}^{*}$ and $\mathbf{V}^{*}$. As the first term is non-linear,  reducing Eq. (\ref{eqn3}) amounts to optimizing a non-linear objective function. We use the zeroth order non-convex stochastic optimizer from \cite{balasubramanian2021zeroth}.
This optimizer has been proven to avoid saddle points and avoids local minima during optimization \cite{balasubramanian2021zeroth}, and is specifically designed for constrained optimization problems like in Eq. (\ref{eqn3}). Additionally due to its weaker dependence on input data dimensionality \cite{balasubramanian2021zeroth}, $\mathbf{W}$ and $\mathbf{V}$ can be computed faster despite using high dimensional terrain observations.

To perform robot adaptation by negotiation, we optimize $\mathbf{V}$ in an online fashion during the execution phase by solving the MAB optimization problem in Eq. (\ref{eqn2.5}), which has a convex objective with non-smooth regularization term. To perform fast online learning for negotiation, we introduce a novel iterative optimization algorithm that is tailored to solve the regularized optimization in Eq. (\ref{eqn2.5}), which at each time step performs fast iterations and converges in real-time to a global optimal value of $\mathbf{V}$. This optimization algorithm is provided in Alg. \ref{alg2}. Specifically, to solve for the optimal $\mathbf{V}$, we minimize Eq. (\ref{eqn2.5}) with respect to $\mathbf{v}^{i}$, resulting in:
 \begin{equation}\label{optW1}
  \sum_{k=t}^{t+T}\lambda_{3}\big(2(r^i_{k})^2(\mathbf{o}^{i}_{t})^{\top}(\mathbf{o}^{i}_{t})\mathbf{v}^{i} - 2{r^*_{k}}{r^{i}_{k}}\mathbf{o}^{i}_{t}\big) + \lambda_{4}\mathbf{Q}\mathbf{v}^{i} = 0
 \end{equation}
where $\mathbf{Q}$ is a block diagonal matrix expressed as $\mathbf{Q} = \frac{\mathbf{I}}{2\Vert \mathbf{V}\Vert_{E}}$ and $\mathbf{I}\in \mathbb{R}^{N \times N}$ is an identity matrix.
Then, we compute $\mathbf{v}^{i}$ in a closed-form solution as:
 \begin{equation}\label{optW2}
\mathbf{v}^{i} =(\lambda_{4}\mathbf{Q} + 2\sum_{k=t}^{t+T}\lambda_{3}(r^i_{k})^2(\mathbf{o}^{i})^{\top}\mathbf{o}^{i})^{-1} \lambda_{3}\sum_{k=t}^{t+T} ( 2{r^*_{k}}{r^{i}_{k}}\mathbf{o}^{i})
 \end{equation}
Because $\mathbf{Q}$ and $\mathbf{V}$ are
interdependent, 
we are able to derive
an iterative algorithm to compute them as described in Algorithm \ref{alg2}.

\noindent \textbf{Convergence.}
Algorithm \ref{alg2} is guaranteed to converge to the optimal solution for the optimization problem in Eq. (\ref{eqn2.5})\footnote{\label{supplementary}Derivation is provided at the end of the document}.

\noindent \textbf{Complexity.} 
For each iteration of Algorithm \ref{alg2}, computing Steps 3, 4, and 7 is trivial, and  Step 8 is computed by solving a system of linear equations with quadratic complexity.

\section{Experiments}\label{sec:experiments}
This section presents the experimental setup and implementation details of our NAUTS approach, and provides a comparison of our approach with several previous state-of-the-art methods.

\subsection{Experimental Setup}

We use a Clearpath Husky ground robot for our field experiments. The robot is equipped with an Intel Realsense D435 color camera, an Ouster OS1-64 LiDAR, a Global Positioning System (GPS), and an array of sensors including a Microstrain 3DM-GX5-25 Inertial Measurement Unit (IMU) and wheel odometers.
The robot states, i.e., robot pose, are estimated using an Extended Kalman Filter (EKF) \cite{rigatos2010extended}, applied on sensory observations from LiDAR, IMU, GPS, and wheel odometers.
The RGB images and the estimated robot states are used as our inputs.
The robot runs a 4.3 GHz i7 CPU with 16GB RAM and Nvidia 1660Ti GPU with 6GB of VRAM, which runs non-linear behavior prediction models at 5Hz and policy negotiation at ~0.25 Hz.

We evaluate our approach on navigation tasks that require traversing from the robot's initial position to a goal position, and provide a performance comparison against state-of-the-art robot navigation techniques
including Model Predictive Path Integral (MPPI) \cite{williams2016aggressive} control, Terrain Representation and Apprenticeship Learning (TRAL) \cite{siva2019robot}, Berkley Autonomous Driving Ground Robot (BADGR) \cite{kahn2021badgr}, and Learning to Navigate from Disengagements (LaND) \cite{kahn2021land}. To quantitatively evaluate and compare these approaches to NAUTS, we use the following evaluation metrics:
\begin{itemize}
\item \emph{Failure Rate (FR)}: This metric is defined as the number of times the robot fails to complete the navigation task across a set of experimental trials.
If a robot flips or is stopped by a terrain obstacle,
it is considered a failure.
Lower values of FR indicate better performance.
\item \emph{Traversal Time (TT)}: This metric is defined as the time taken to complete the navigation task over given terrain.
Smaller values of TT indicate better performance.
\item \emph{Distance traveled (DT)}: This metric is defined as the total distance traveled by the robot when completing a navigational task. A smaller DT value may indicate better performance.
\item \emph{Adaptation time (AT)}: This metric is defined as the time taken by the robot to regain half its linear velocity when introduced to an unseen unstructured environment. A lower value of AT may indicate better performance.
\end{itemize}

\begin{figure*}[t]
  \subfigure[Tall grass terrain]{
    \label{tall_grass_scenario}
    \begin{minipage}[b]{0.33\textwidth}
      \centering
        \includegraphics[width=0.98\textwidth]{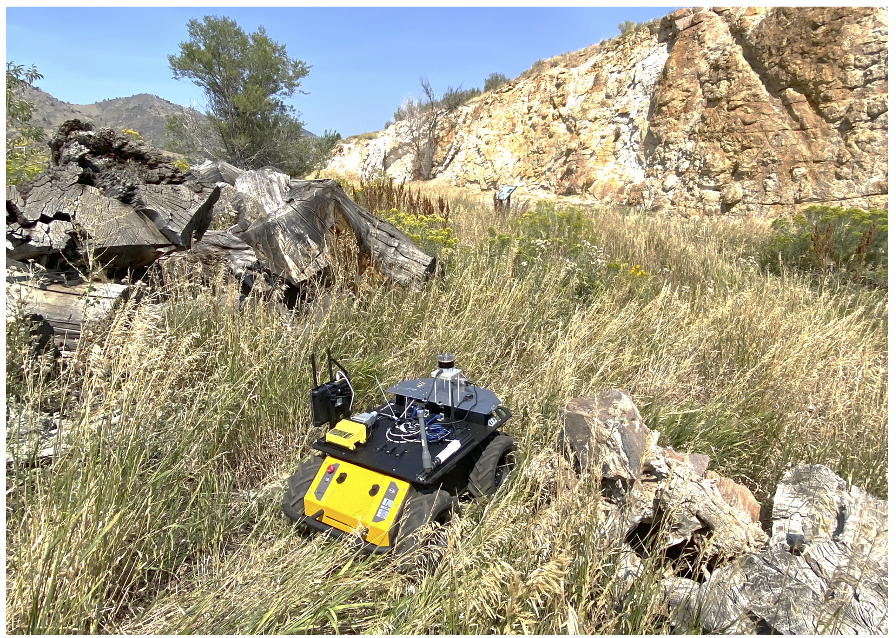}
        \centering
    \end{minipage}}
   \subfigure[Paths taken by different methods]{
    \label{tall_grass_paths}
    \begin{minipage}[b]{0.33\textwidth}
      \centering
        \includegraphics[width=0.98\textwidth]{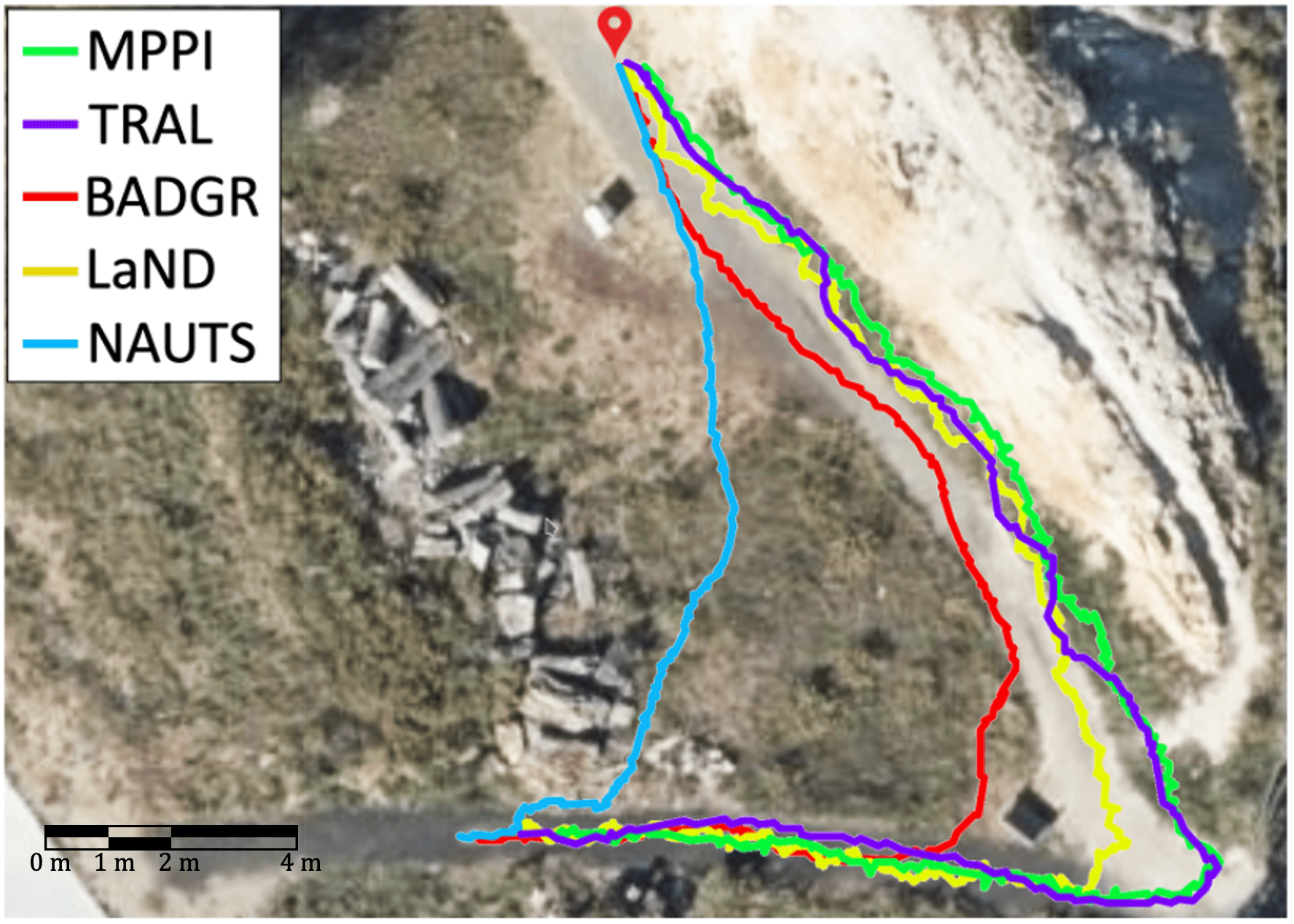}
        \centering
    \end{minipage}}
    \subfigure[Importance of different policies]{
    \label{tall_grass_negotiation}
    \begin{minipage}[b]{0.33\textwidth}
      \centering
        \includegraphics[width=0.87\textwidth]{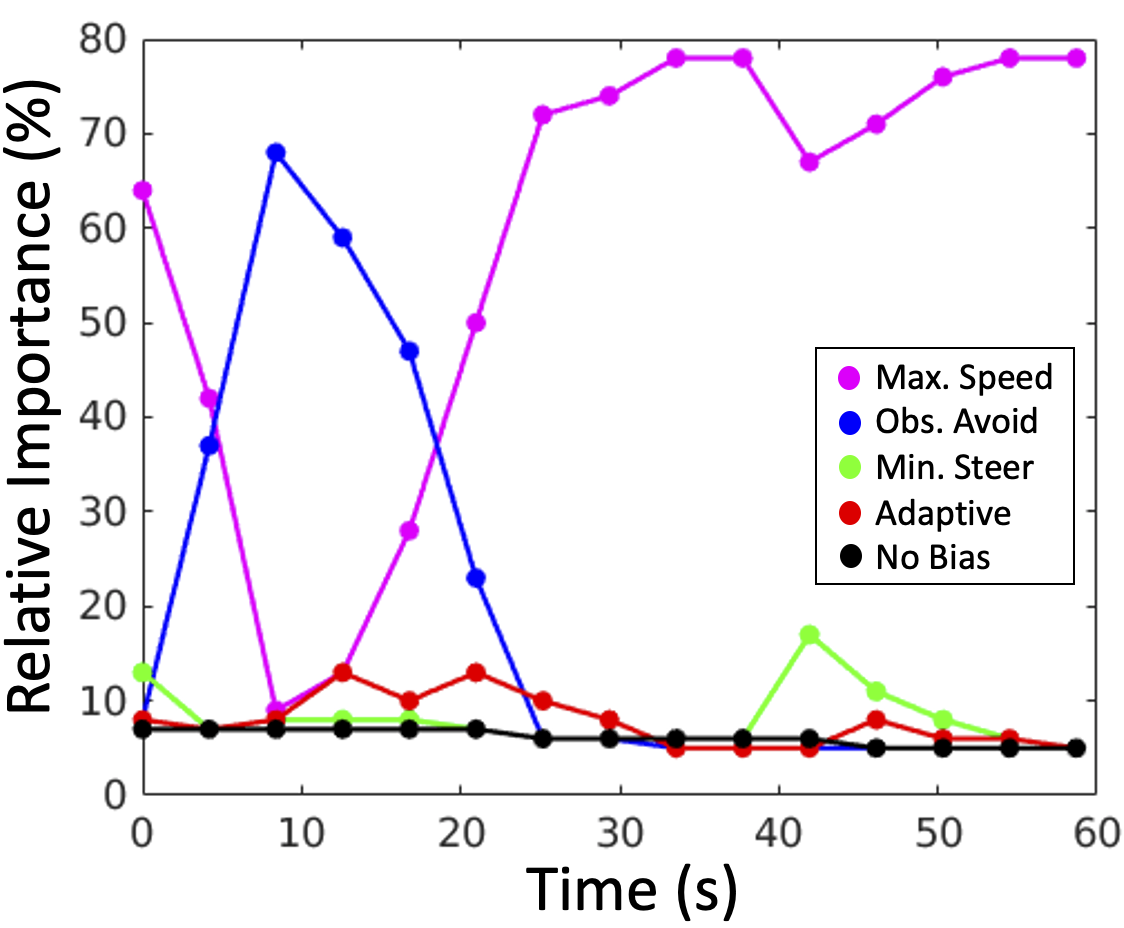}
        \centering
    \end{minipage}
      }
    \vspace{-8pt}
  \caption{The tall grass terrain used in our experiments and the qualitative results over this terrain.
  }  \label{tall_grass_analysis} 
\vspace{-6pt}
\end{figure*}

To collect the training data, a human expert demonstrates robot driving over simple terrains of concrete, short grass, gravel, medium-sized rocks, large-sized rocks and forest terrain. 
Each of these terrain were used to learn one specific aspect of robot navigation such as adjusting traversal speeds over large-sized rocks, or obstacle avoidance using the forest terrain. 
Specifically, we used these terrains to learn 
from a library of five 
distinct navigational policies:

\begin{itemize}
    \item \emph{Maximum Speed:} When following this navigational policy, the human expert drives with the maximum traversal speed irrespective of the terrain the robot traverses upon. The aim when following the maximum speed navigational policy is to teach the robot to cover as much distance as possible in the least amount of time. Thus, while collecting training data with this policy the expert demonstrator uses straight line traversal without steering the robot. 
    \item \emph{Obstacle Avoidance:} While following this policy, the expert demonstrates how to maneuver by driving around obstacles to avoid collision. To learn this policy, expert demonstrations in forest terrains are used where humans navigate the forest by avoiding trees and logs
    while moving the robot through the terrain.
    The underlying goal with this policy is to teach the robot to steer around obstacles.  
    \item \emph{Minimum Steering:} For this policy, the expert drives the robot with limited steering. 
    During navigation, linear velocity is fixed to 0.75 m/s and obstacle avoidance is performed by beginning to turn the robot when it is further away from obstacles instead of making short, acute turns. The policy differs from obstacle avoidance by maintaining a fixed speed while taking a smooth and long maneuver around obstacles.
    \item \emph{Adaptive Maneuvers:}  While following this policy, the expert varies the robot's speed across different terrain to reduce traversal bumpiness. Specifically, with terrains that are relatively less rugged such as concrete or short-grass, the expert demonstrator uses high speed maneuvers. On the other hand, over terrains with high ruggedness such as gravel or medium sized rocks, the expert demonstrator uses slower speeds, with the slowest traversal speed across the large rocks terrain.
    \item \emph{No Navigational Bias:} When following this policy, multiple expert demonstrators navigate the robot across the different terrains without particular policy bias, i.e., without following any specific navigational policy. The underlying goal behind using such policy is to cover most of the common navigational scenarios encountered by the robot, and include the navigational bias from multiple expert demonstrators. 
\end{itemize}

For each policy, the robot is driven on each of the different terrains, resulting in approximately 108000 distinctive terrain observations with the corresponding sequence of robot navigational behaviors and states for each navigational policy.
No further pre-processing is performed on the collected data. We use this data to learn optimal $\pi^{i}$, $i=1,\dots, N$ and $\mathbf{V}$ during training. We learn these parameters for different values of hyper-parameters of the NAUTS approach, i.e., $\lambda_{1}$, $\lambda_{2}$, $\lambda_{3}$, $\lambda_{4}$ and $T$. 
The combination of these hyper-parameters that results in the best performance of NAUTS during 
validation
are then used for our experiments. 
In our case, the optimal performance of NAUTS is obtained at  
 $\lambda_{1}=0.1, \lambda_{2}=10$, $\lambda_{3}=1$ and $\lambda_{4}=0.1$ for $T=9$.

\begin{table}[h]
\centering
\caption{Quantitative results for scenarios when the robot traverses over
dynamic, uncertain grass terrain. 
}
\label{tab:RTT1}
\tabcolsep=0.07cm
\centering
\begin{tabular}{ | c|| c| c| c| c| c|}
\hline
Metrics & MPPI \cite{williams2016aggressive}  & TRAL \cite{siva2019robot} & BADGR \cite{kahn2021badgr}  &  LaND \cite{kahn2021land} & \textbf{NAUTS} \\
\hline
FR (/10) & 3 & 3 & \textbf{1} & 5 & \textbf{1} \\
TT (s) & 88.72 & 72.99 & 64.47 & 90.18 & \textbf{58.79} \\
DT (m) & 68.58 & 56.69 & 50.29 & 64.93 & \textbf{36.57} \\
AT (s) & 14.23 & 10.92 & -- & -- & \textbf{6.24} \\
\hline
\end{tabular}
\vspace{-6pt}
\end{table}

\subsection{Navigating over Dynamic Uncertain Grass Terrain}

In this set of experiments, we evaluate robot traversal performance over the tall grass terrain environment, as shown in Fig. \ref{tall_grass_scenario}. This is one of the most commonly found terrains in off-road environments and is characterized by deformable dynamic obstacles added with the terrain uncertainty of occluded obstacles. 
The process of negotiation is continuously performed throughout the execution phase. 
The evaluation metrics for each of the methods are computed across ten trial runs over the tall grass terrain environment.

The quantitative results obtained by our approach and its comparison with other methods are presented in Table \ref{tab:RTT1}. 
In terms of the FR metric, BADGR and NAUTS obtain the lowest values, whereas MPPI, TRAL and LaND have high FR values.
Navigation failure for MPPI, TRAL and LaND generally occurred as the robot transitioned into the tall grass terrain where it would get stuck after determining the tall grass was an obstacle. Failure cases for NAUTS and BADGR occurred when the robot was stuck in the tall grass terrain due to wheel slip. 
Both NAUTS and BADGR obtain significantly fewer failures than MPPI and LaND methods due to their ability to adapt to different terrains.

\begin{figure*}[t]
  \subfigure[Forest terrain]{
    \label{forest_scenario} 
    \begin{minipage}[b]{0.33\textwidth}
      \centering
        \includegraphics[width=0.98\textwidth]{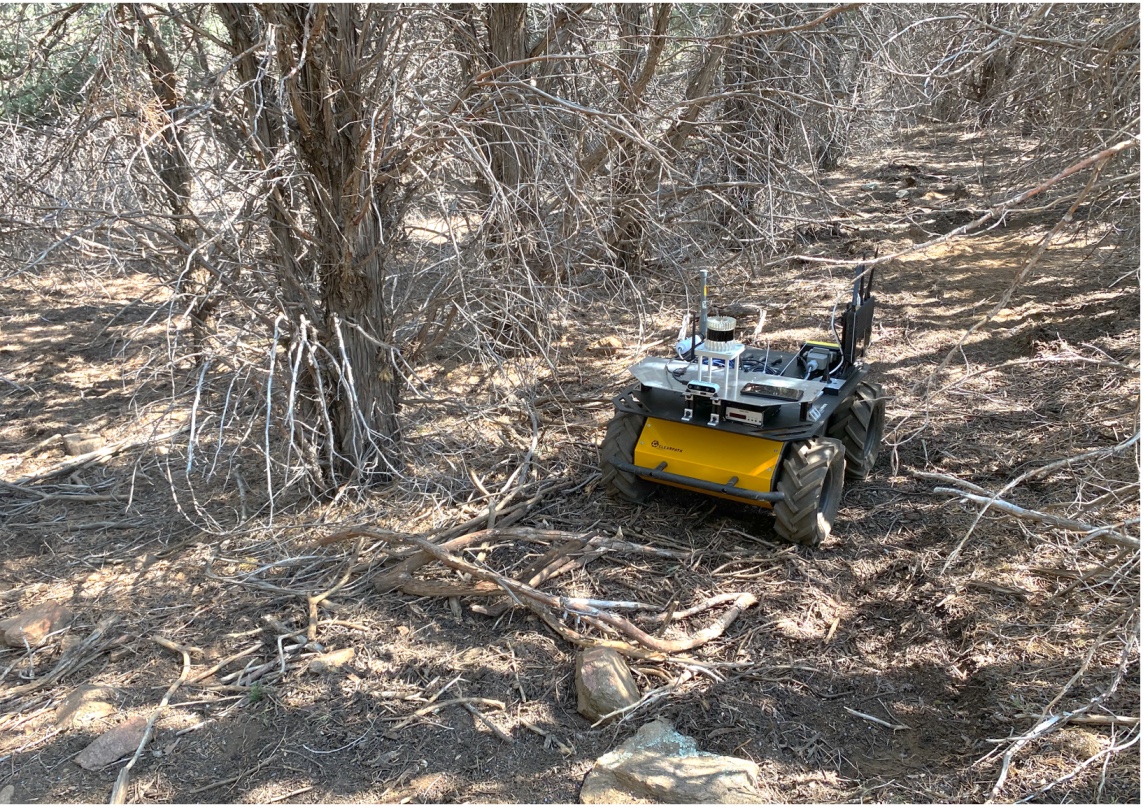}
        \centering
    \end{minipage}}
   \subfigure[Path taken by different methods]{
    \label{forest_paths}
    \begin{minipage}[b]{0.33\textwidth}
      \centering
        \includegraphics[width=0.98\textwidth]{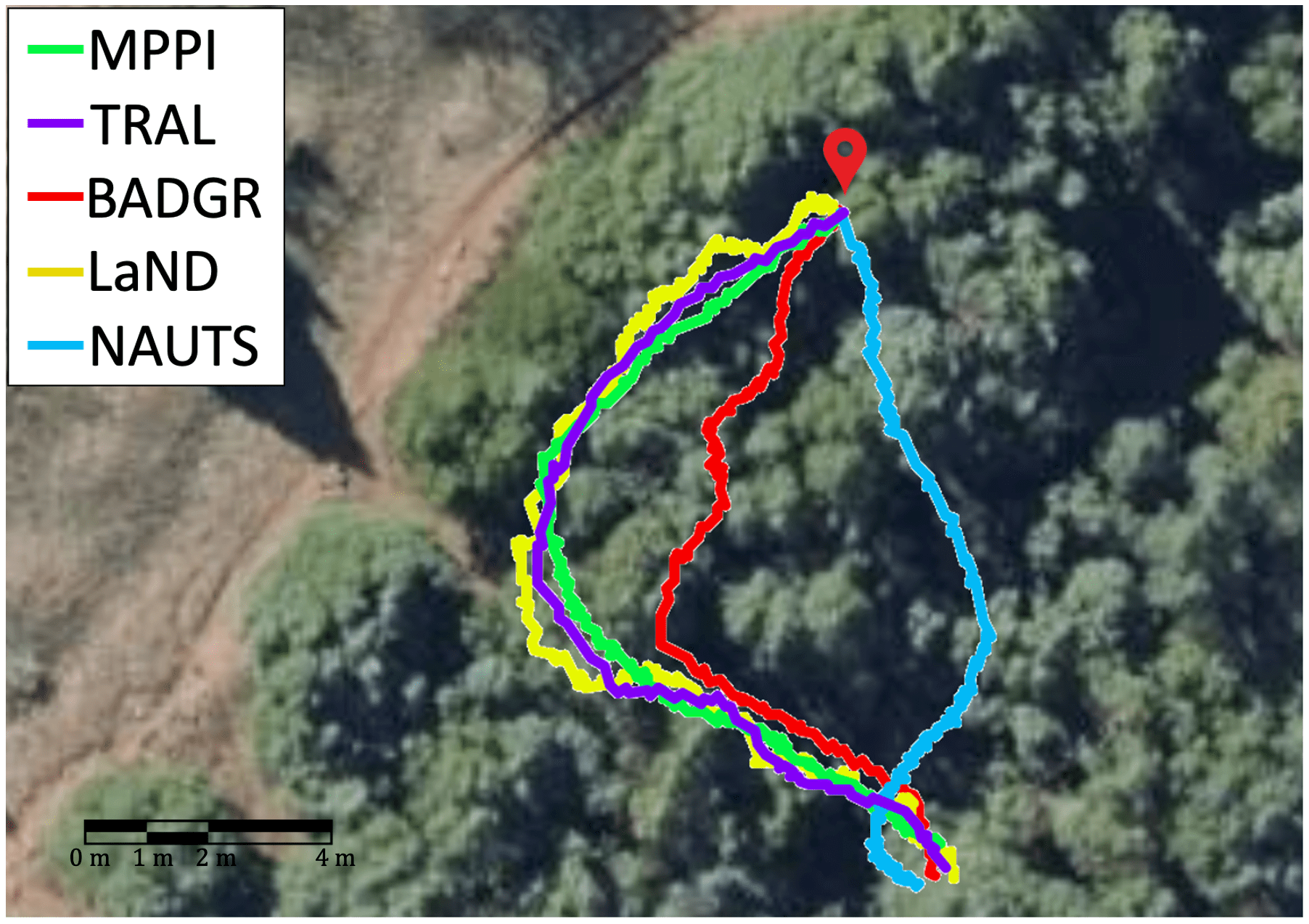}
        \centering
    \end{minipage}}
    \subfigure[Importance of different policies]{
    \label{forest_negotiation}
    \begin{minipage}[b]{0.32\textwidth}
      \centering
        \includegraphics[width=0.9\textwidth]{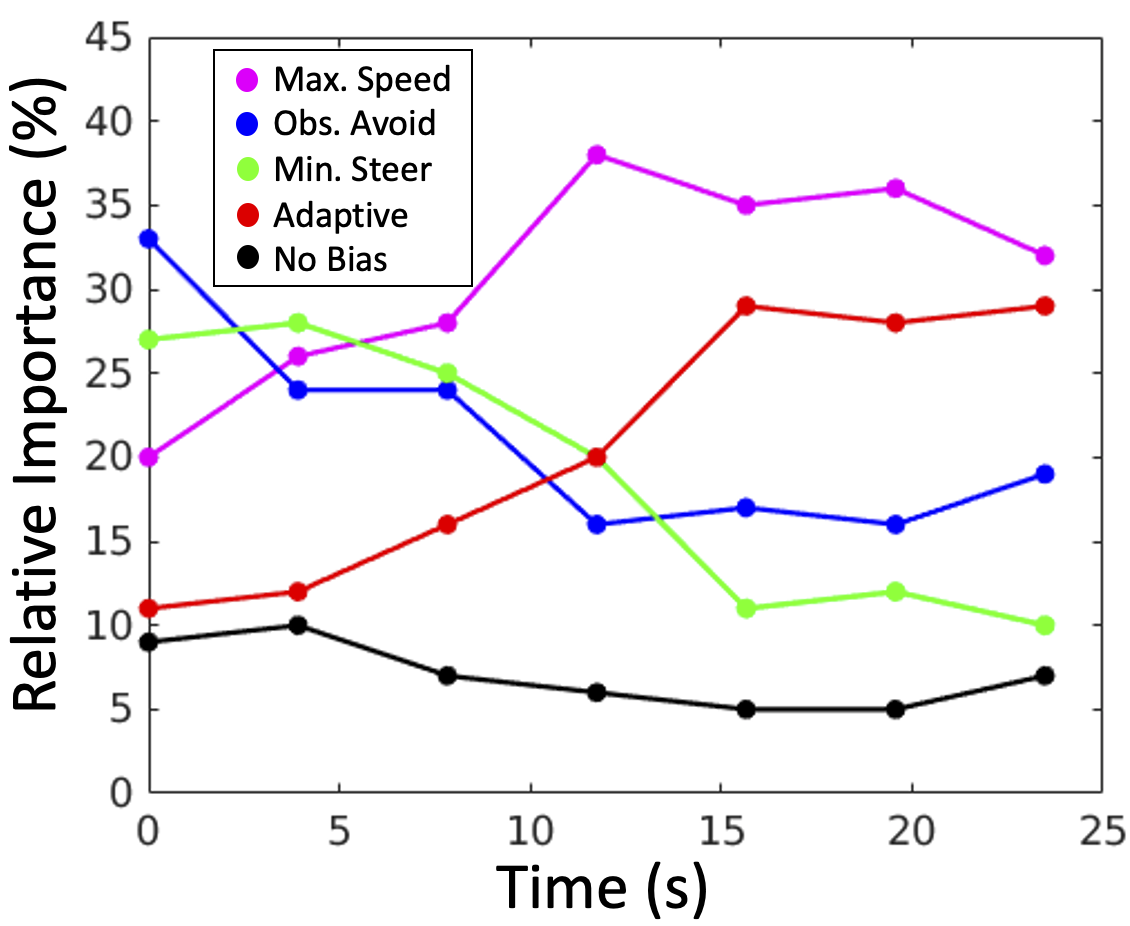}
        \centering
    \end{minipage}
      }\vspace{-8pt}
  \caption{The forest terrain used in our experiments and the qualitative results over this terrain.  
  }  \label{forest_terrain_analysis} 
 \vspace{-6pt}
\end{figure*}

When comparing the traversal time and the distance traversed by the different methods, we observe that NAUTS obtains the best performance followed by BADGR and TRAL. 
The LaND and MPPI approaches obtain higher TT and DT metrics, with MPPI performing the poorest in terms of DT and LaND performing poorest in terms of TT. 
A qualitative comparison, from a single trial, of the path traversed by these methods is provided in Fig. \ref{tall_grass_paths}. Notice, MPPI, LaND, and TRAL all consider tall grass as obstacles and avoid this terrain while traversing. 
We observe that BADGR and NAUTS explore tall grass terrain and the shortest path is taken with our NAUTS approach resulting in the lowest DT and TT values.

NAUTS also performs better than the TRAL and MPPI approaches in terms of the AT metric. The AT metric is observed when robots 
encounter an unseen terrain and require adaptation. In this environment, that happens when the robot transitions into the tall grass terrain.
We do not provide AT values for BADGR and LaND as both approaches have a fixed linear velocity without adaptation. 
Overall, we observe that our approach obtains successful navigation (from FR metric) and better efficiency (from TT and DT metrics) over previous methods. 

Fig. \ref{tall_grass_negotiation} illustrates the NAUTS negotiation process between the five policies in the tall grass terrain. NAUTS learns optimal combinations of policies in real-time during execution (each update is marked by dots in the figure). Initially, max speed has higher importance over other policies.  However, as the robot enters tall grass, obstacle avoidance becomes more important.  While traversing further, the robot learns to give more importance to the max speed policy again and obstacle avoidance becomes less important. All other policies have relatively low importance, but they never reach zero, as NAUTS 
regularly evaluates
the different policies.

\begin{table}[htb]
\centering
\caption{Quantitative results for scenarios when the robot traverses over unseen
dynamic, unstructured off-road forest terrain. 
}
\label{tab:RTT2}
\tabcolsep=0.07cm
\centering
\begin{tabular}{ | c|| c| c| c| c| c|}
\hline
Metrics & MPPI \cite{williams2016aggressive}  & TRAL \cite{siva2019robot} & BADGR \cite{kahn2021badgr}  &  LaND \cite{kahn2021land} & \textbf{NAUTS} \\
\hline
FR (/10) & 5 & 5 & 4 & 7 & \textbf{2} \\
TT (s) & 34.28 & 33.95 & 26.17 & 33.98 & \textbf{24.21} \\
DT (m) & 24.68 & 23.77 & 20.94 & 26.51 & \textbf{16.45} \\
AT (s) & 10.04 & 11.93 & -- & -- & \textbf{7.32} \\
\hline
\end{tabular}
\vspace{-6pt}
\end{table}

\subsection{Navigating on Unseen Unstructured Forest Terrain}
In this set of experiments, we evaluate navigation across forest terrains. Apart from high uncertainty and dynamic obstacles, this terrain has different characteristics that the robot has not previously seen during training, e.g, terrain covered with wood chips, dried leaves, 
rocks, and tree branches.
Similar to the previous set of experiments, the evaluation metrics in the forest terrain are computed across ten runs for each of the methods.

The quantitative results over off-road forest terrain are presented in Table \ref{tab:RTT2}. In terms of the FR metric, we observe a similar trend seen in the tall grass terrain experiments. Specifically, MPPI and TRAL have similar performance in terms of FR metrics. Our NAUTS approach obtains the lowest FR value followed by the BADGR approach, and the LaND approach obtains the highest value. 
Common failures in the forest terrain occur when tree branches occluding the terrain are classified as obstacles or traversing over large rocks, wooden tree barks, or mud in the terrain cause the robot to get stuck. 
NAUTS also obtains better efficiency in both the TT and DT metrics, followed by the BADGR approach. Again, MPPI and TRAL both obtain similar TT and DT values, and LaND exhibits the worst performance. 

Fig. \ref{forest_paths} illustrates qualitatively how MPPI, TRAL, and LaND avoid uncertain and unseen paths and follow an existing trail. However, BADGR explores unknown paths, reaching the goal faster than 
these methods, and NAUTS outperforms all methods by exploring different policies in this unseen terrain.
In this set of experiments, the AT metric is observed throughout navigation as each section of the terrain is not previously seen by the robot and requires the robot to adapt.
NAUTS obtains better AT values than MPPI and TRAL, indicating a better adaptation capability.

Fig. \ref{forest_negotiation} illustrates the negotiation process by NAUTS in the forest terrain.
At the start of the navigation task, each policy has different importance, with obstacle avoidance being the most significant. As the robot continues with the navigation task, it learns to use the optimal combination of policies, which results in the most efficient navigation. Thus, the max speed and adaptive navigational policies become more significant than other policies. 
It is important to note that there is no single optimal policy throughout navigation due to i) the highly unstructured nature of this terrain and ii) the continuous exploration of the NAUTS approach.

\section{Conclusion}\label{sec:conclusion}

In this paper, we introduce the novel NAUTS approach for robot adaptation by 
negotiation for navigating in unstructured terrains, that enables ground robots to adapt their navigation policies using a negotiation process.
Our approach learns a non-linear prediction model to function as a terrain-aware joint local controller and planner corresponding to various policies, and then uses the negotiation process to form agreements between these policies in order to improve robot navigation efficiency.
Moreover, our approach explores different policies to improve navigation efficiency in a given environment continuously.
We also 
developed
a novel optimization algorithm that solves the global optimal solution to the robot negotiation problem in real-time.
Experimental results have shown that our approach enables a robot to negotiate its behaviors with the terrain and delivers more successful and efficient navigation compared to the previous methods.


\bibliographystyle{IEEEtran}
\bibliography{references}

\begin{center}
\textbf{\Large NAUTS: Negotiation for Adaptation to Unstructured Terrain Surfaces} 
\end{center}
\begin{center}
\textbf{\Large \emph{Supplementary Material}}
\end{center}
\vspace{10px}
\setcounter{equation}{0}
\setcounter{section}{0}
\setcounter{figure}{0}
\setcounter{table}{0}
\setcounter{page}{1}
\makeatletter
\renewcommand{\theequation}{\arabic{equation}}
\renewcommand{\thefigure}{\arabic{figure}}
In this supplementary material document, Section \ref{sec:proof} presents the proof of convergence for the optimization algorithm proposed in the main paper and section \ref{sec:experimental_details} discusses the additional details on our experimentation procedure.  

\section{Proof of Convergence for the Optimization Algorithm}\label{sec:proof}

In the following, we prove that Algorithm 1 (in the main paper) decreases the value of the objective function in Eq. (4) (of the main paper) with each iteration during execution and converges to the global optimal solution. 

At first, we present a lemma:
\begin{lemma}\label{lemma1}
For any two given vectors $\mathbf{a}$ and $\mathbf{b}$, the following inequality relation holds:
$\|\mathbf{b}\|_2 - \frac{\|\mathbf{b}\|_2^2}{2\|\mathbf{a}\|_2}
\leq
\|\mathbf{a}\|_2 - \frac{\|\mathbf{a}\|_2^2}{2\|\mathbf{a}\|_2}$
\end{lemma}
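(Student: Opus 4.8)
The plan is to reduce this vector inequality to an elementary scalar inequality, since both sides depend on $\mathbf{a}$ and $\mathbf{b}$ only through their Euclidean norms. I would set $a = \|\mathbf{a}\|_2$ and $b = \|\mathbf{b}\|_2$, noting that the statement implicitly requires $\mathbf{a} \neq \mathbf{0}$ (otherwise the denominator $2\|\mathbf{a}\|_2$ vanishes), so $a > 0$ and $b \geq 0$. The first useful observation is that the right-hand side collapses dramatically: $\|\mathbf{a}\|_2 - \frac{\|\mathbf{a}\|_2^2}{2\|\mathbf{a}\|_2} = a - \frac{a}{2} = \frac{a}{2}$. Thus the claim is equivalent to showing $b - \frac{b^2}{2a} \leq \frac{a}{2}$.

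Next I would clear the denominator by multiplying both sides by $2a > 0$, which preserves the inequality direction, yielding the equivalent statement $2ab - b^2 \leq a^2$. Moving everything to one side gives $0 \leq a^2 - 2ab + b^2 = (a-b)^2$. Since a square of a real number is always nonnegative, the inequality holds for all admissible $a, b$, which establishes the lemma. Equality occurs precisely when $a = b$, i.e., when $\|\mathbf{a}\|_2 = \|\mathbf{b}\|_2$.

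There is no real obstacle here: the result is an instance of the standard auxiliary inequality underlying iteratively reweighted schemes for $\ell_{2,1}$-type (structured sparsity) regularizers, and it amounts to a completion of the square. The only point requiring a moment of care is recognizing the correct reduction, namely that the right-hand side simplifies to $\frac{1}{2}\|\mathbf{a}\|_2$ and that the inequality is designed so the concave surrogate $b - \frac{b^2}{2a}$ never exceeds the function value $\frac{a}{2}$ it lower-bounds. I would also explicitly flag the $\mathbf{a} \neq \mathbf{0}$ assumption, since this lemma is invoked in the convergence argument with $\mathbf{a}$ playing the role of the current iterate $\mathbf{v}^i$ and $\mathbf{b}$ the updated one; the exploration norm $\Vert \mathbf{V} \Vert_{E}$ in Eq. (\ref{eqn2.5}) already forces $\mathbf{v}^i \neq \mathbf{0}$, so this case never arises in the intended application.
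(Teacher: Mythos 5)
Your proof is correct and is essentially the same argument as the paper's: both reduce the inequality to the nonnegativity of $(\|\mathbf{a}\|_2 - \|\mathbf{b}\|_2)^2$, with you working backwards from the claim and the paper working forwards from the squared term. Your explicit note that $\mathbf{a} \neq \mathbf{0}$ is required (and is guaranteed in the intended application by the exploration norm) is a worthwhile addition the paper omits.
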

\begin{proof}
\begin{equation}
-(\Vert\mathbf{b}\Vert_{2}-\Vert\mathbf{a}\Vert_{2})^2 \leq 0  \nonumber
\end{equation}
\begin{equation}
-\Vert\mathbf{b}\Vert_{2}^{2} - \Vert\mathbf{a}\Vert_{2}^{2} + 		        2\Vert\mathbf{b}\Vert_{2}\Vert\mathbf{a}\Vert_{2} \leq 0   \nonumber
\end{equation}
\begin{equation}
2\Vert\mathbf{b}\Vert_{2}\Vert\textbf{a}\Vert_{2} - \Vert\mathbf{b}\Vert_{2}^{2} \leq \Vert\textbf{a}\Vert_{2}^{2}       \nonumber
\end{equation}	
\begin{equation}
\Vert\mathbf{b}\Vert_{2} - \dfrac{\Vert\mathbf{b}\Vert_{2}^{2}}{2\Vert\textbf{a}\Vert_{2}} \leq  \Vert\textbf{a}\Vert_{2} - \dfrac{\Vert\textbf{a}\Vert_{2}^{2}}{2\Vert\textbf{a}\Vert_{2}}  \nonumber
\end{equation}
\end{proof}
From Lemma \ref{lemma1}, we can derive the following corollary:
\begin{corollary}\label{corollary1}
For any two given matrices $\mathbf{A}$ and $\mathbf{B}$ , the following inequality relation holds:
 \begin{eqnarray}
\Vert\mathbf{B}\Vert_E - \frac{\Vert\mathbf{B}\Vert_E^2}{2\Vert\mathbf{A}\Vert_E}
\leq
\Vert\mathbf{A}\Vert_E - \frac{\Vert\mathbf{A}\Vert_E^2}{2\Vert\mathbf{A}\Vert_E} \nonumber
\end{eqnarray}
\end{corollary}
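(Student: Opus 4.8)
The plan is to recognize that Lemma~\ref{lemma1} is, at its core, a statement about two non-negative real numbers rather than about vectors per se. Inspecting the chain of inequalities in its proof, every step manipulates only the scalar magnitudes $\|\mathbf{a}\|_2$ and $\|\mathbf{b}\|_2$; the vectors $\mathbf{a}$ and $\mathbf{b}$ never enter except through their norms. Consequently, what the lemma actually establishes is the elementary scalar fact that for any reals $p > 0$ and $q \geq 0$ one has $q - \frac{q^2}{2p} \leq p - \frac{p^2}{2p}$, a relation equivalent to $(p-q)^2 \geq 0$. This reinterpretation is the crux of the whole argument, since it decouples the inequality from the particular normed space in which it is stated.

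Given this observation, I would prove the corollary by direct substitution. The exploration norm is a non-negative scalar-valued functional, being a finite sum of ratios of non-negative quantities, $\|\mathbf{V}\|_E = \sum_{i=1}^{N} \|\mathbf{V}\|_F / \|\mathbf{v}^i\|_2$; hence $\|\mathbf{A}\|_E$ and $\|\mathbf{B}\|_E$ are themselves non-negative reals. Setting $p = \|\mathbf{A}\|_E$ and $q = \|\mathbf{B}\|_E$ in the scalar inequality above immediately yields the claimed relation. For a reader who prefers to see the manipulation carried out explicitly, I would instead replay the algebra of Lemma~\ref{lemma1} verbatim with the $E$-norm in place of the $2$-norm: begin from $-(\|\mathbf{B}\|_E - \|\mathbf{A}\|_E)^2 \leq 0$, expand the square, rearrange to isolate $\|\mathbf{B}\|_E^2$, and divide through by $2\|\mathbf{A}\|_E$. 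Each step is legitimate precisely because it relies only on $\|\mathbf{A}\|_E$ and $\|\mathbf{B}\|_E$ being non-negative and on $\|\mathbf{A}\|_E$ being strictly positive.

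The only point requiring any care, and the nearest thing to an obstacle, is the well-definedness of the division by $2\|\mathbf{A}\|_E$, that is, ensuring $\|\mathbf{A}\|_E \neq 0$. In the context of Algorithm~\ref{alg2}, $\mathbf{A}$ plays the role of the current iterate $\mathbf{V}$, and the exploration norm is constructed so that no column $\mathbf{v}^i$ is permitted to vanish; thus $\|\mathbf{V}\|_E > 0$ throughout the iterations and the quotient is always defined. With that caveat recorded, the corollary follows as an immediate consequence of the scalar inequality underlying the lemma, and no genuinely new estimation is needed.
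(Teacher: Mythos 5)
Your proposal is correct and matches the paper's intended argument: the paper gives no separate proof of the corollary, deriving it directly from Lemma~\ref{lemma1} on the grounds that the lemma's algebra uses only the non-negative scalar values of the norms, which is exactly the observation you make explicit. Your added remark that $\Vert\mathbf{A}\Vert_E$ must be strictly positive (guaranteed in Algorithm~\ref{alg2} since the exploration norm forces $\mathbf{v}^i \neq \mathbf{0}$) is a point the paper leaves implicit, so your write-up is, if anything, slightly more careful than the original.
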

where the operator $\Vert \cdot \Vert_{E}$ is the exploration norm introduced in the main paper.

\begin{theorem}\label{thm2}
Algorithm 1 (in the main paper) converges fast to the global optimal solution to the terrain negotiation problem in Eq. (4) (in the main paper) during execution.
\end{theorem}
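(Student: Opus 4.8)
The plan is to recognize Algorithm 1 as an iteratively reweighted (majorization–minimization) scheme for the non-smooth problem in Eq. (4) and to prove (i) monotone descent of the objective at every inner iteration, (ii) convergence of the objective sequence, and (iii) identification of the limit with the global minimizer via convexity. Write $J(\mathbf{V})=\lambda_{3}\sum_{i}\mathcal{R}(\mathbf{o}_{t}^{i},r_{t}^{i};\mathbf{v}^{i})+\lambda_{4}\Vert\mathbf{V}\Vert_{E}$ for the objective, let $D(\mathbf{V})=\lambda_{3}\sum_{i}\mathcal{R}(\mathbf{o}_{t}^{i},r_{t}^{i};\mathbf{v}^{i})$ be its convex quadratic data term, and let $\mathbf{V}^{(s)}$ be the iterate after the $s$-th pass of the inner \texttt{while} loop. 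The first step is to verify that the closed-form update of Eq. (\ref{optW2}) (Step 8) is exactly the unique constrained minimizer of the smooth surrogate
\[
\tilde{J}(\mathbf{V};\mathbf{V}^{(s)})\;=\;D(\mathbf{V})\;+\;\lambda_{4}\,\mathrm{Tr}\!\left(\mathbf{V}^{\top}\mathbf{Q}\,\mathbf{V}\right),\qquad \sum_{i}(\mathbf{o}_{t}^{i})^{\top}\mathbf{v}^{i}=1,
\]
where $\mathbf{Q}$ is frozen at the value computed from $\mathbf{V}^{(s)}$ in Step 7. This holds because Eq. (\ref{optW1}) is precisely the KKT stationarity condition of $\tilde{J}$ with respect to $\mathbf{v}^{i}$; since $\mathcal{R}$ is convex quadratic and $\mathbf{Q}\succ\mathbf{0}$, the surrogate is strictly convex with an affine constraint, so its minimizer is unique and given by Eq. (\ref{optW2}).

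Next I would establish $J(\mathbf{V}^{(s+1)})\le J(\mathbf{V}^{(s)})$ by chaining two inequalities. By optimality of $\mathbf{V}^{(s+1)}$ for the surrogate, $\tilde{J}(\mathbf{V}^{(s+1)};\mathbf{V}^{(s)})\le\tilde{J}(\mathbf{V}^{(s)};\mathbf{V}^{(s)})$, which (because $\mathbf{Q}$ is anchored at $\mathbf{V}^{(s)}$) rearranges to $D(\mathbf{V}^{(s+1)})+\lambda_{4}\tfrac{\Vert\mathbf{V}^{(s+1)}\Vert_{E}^{2}}{2\Vert\mathbf{V}^{(s)}\Vert_{E}}\le D(\mathbf{V}^{(s)})+\lambda_{4}\tfrac{\Vert\mathbf{V}^{(s)}\Vert_{E}^{2}}{2\Vert\mathbf{V}^{(s)}\Vert_{E}}$. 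Applying Corollary \ref{corollary1} with $\mathbf{A}=\mathbf{V}^{(s)}$ and $\mathbf{B}=\mathbf{V}^{(s+1)}$ gives $\Vert\mathbf{V}^{(s+1)}\Vert_{E}-\tfrac{\Vert\mathbf{V}^{(s+1)}\Vert_{E}^{2}}{2\Vert\mathbf{V}^{(s)}\Vert_{E}}\le\Vert\mathbf{V}^{(s)}\Vert_{E}-\tfrac{\Vert\mathbf{V}^{(s)}\Vert_{E}^{2}}{2\Vert\mathbf{V}^{(s)}\Vert_{E}}$. Adding the two inequalities cancels the quadratic surrogate terms and leaves $D(\mathbf{V}^{(s+1)})+\lambda_{4}\Vert\mathbf{V}^{(s+1)}\Vert_{E}\le D(\mathbf{V}^{(s)})+\lambda_{4}\Vert\mathbf{V}^{(s)}\Vert_{E}$, i.e. $J(\mathbf{V}^{(s+1)})\le J(\mathbf{V}^{(s)})$. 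Here Lemma \ref{lemma1} is the scalar seed from which Corollary \ref{corollary1} is lifted to the norm level, so this is the workhorse of the descent.

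Then I would note that $J$ is bounded below, being a sum of squared residuals plus the non-negative exploration norm, so the monotone non-increasing sequence $\{J(\mathbf{V}^{(s)})\}$ converges. To upgrade convergence of objective values to convergence to the global optimum, I would use convexity: the feasible set is affine, $D$ is convex, and $\Vert\cdot\Vert_{E}$ is the convex non-smooth penalty of Eq. (4). At a fixed point $\mathbf{V}^{\star}$ the reweighting is self-consistent ($\mathbf{Q}$ is regenerated by $\mathbf{V}^{\star}$), so the surrogate stationarity condition Eq. (\ref{optW1}) coincides with the subgradient optimality condition of the original constrained problem; convexity then certifies $\mathbf{V}^{\star}$ as a global minimizer. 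The strict convexity of the quadratic surrogate and the resulting contraction also deliver the ``fast'' (geometric) rate claimed in the statement.

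The main obstacle I anticipate is the penultimate link of the descent argument: one must verify that the specific reweighting matrix $\mathbf{Q}$ of Step 7 yields a surrogate whose regularization term matches the exploration norm at the anchor point so that the cancellation in the second paragraph is exact. This is delicate because $\Vert\mathbf{V}\Vert_{E}=\Vert\mathbf{V}\Vert_{F}\sum_{i}1/\Vert\mathbf{v}^{i}\Vert_{2}$ mixes a Frobenius factor with per-column reciprocals, so the quadratic form $\mathrm{Tr}(\mathbf{V}^{\top}\mathbf{Q}\mathbf{V})$ must be aligned carefully with $\Vert\cdot\Vert_{E}$ before Corollary \ref{corollary1} can be invoked. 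A secondary subtlety is confirming convexity (or at least that no spurious stationary points exist) for $\Vert\cdot\Vert_{E}$ together with the handling of the linear constraint through its Lagrange multiplier inside the closed-form solve; if $\Vert\cdot\Vert_{E}$ turns out only quasi-convex, I would fall back to proving convergence to a stationary point and use strict convexity of the surrogate plus the affine constraint to rule out oscillation.
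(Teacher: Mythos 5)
Your proposal is essentially the paper's own proof: the Step-8 update is read as the minimizer of a surrogate with $\mathbf{Q}$ frozen at the previous iterate, Corollary 1 (lifted from Lemma 1) is added to the surrogate-optimality inequality to cancel the quadratic reweighting terms and yield monotone descent of the true objective, and convexity of Eq. (4) is then invoked to pass to global optimality, with ``fast'' justified by warm-starting $\mathbf{V}$ across time steps rather than by any proved rate. Both obstacles you flag are genuine and are silently elided in the paper's argument: with $\mathbf{Q}=\mathbf{I}/(2\Vert\mathbf{V}(s)\Vert_E)$ the quadratic form equals $\Vert\mathbf{V}\Vert_F^2/(2\Vert\mathbf{V}(s)\Vert_E)$, which the paper rewrites as $\Vert\mathbf{V}\Vert_E^2/(2\Vert\mathbf{V}(s)\Vert_E)$ even though the two coincide only when $\sum_{i}1/\Vert\mathbf{v}^i\Vert_2=1$, and the convexity of $\Vert\cdot\Vert_E$ (hence of the full objective) asserted in the paper's final step is never established.
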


\begin{proof}
According to Step 8 of Algorithm 1, for each iteration step $s$ during optimization, the value of $\mathbf{v}^{i}(s+1)$ can be given as:
\begin{align}\label{eq:proof1_1}
&\mathbf{v}^{i}(s+1)=\Vert r^{*}(s+1) - (\mathbf{o}_t^i)^{\top} \mathbf{v}^{i*}(s+1){r^{i}(s+1)} \Vert^{2}_{2} \\
& \quad \quad +\sum_{i=1}^{N}(\lambda_4 (\mathbf{v}^{i}(s+1))^\top\mathbf{Q}(s+1)(\mathbf{v}^{i}(s+1))) \nonumber
\end{align}
where $\mathbf{Q}(s+1) = \frac{\mathbf{I}}{2\Vert \mathbf{V}(s)\Vert_{E}} $.
Then we derive that:
\begin{eqnarray}\label{eq:proof2}
&& \mathcal{J}(s+1) + \sum_{i=1}^{N}(\lambda_4 (\mathbf{v}^{i}(s+1))^\top\mathbf{Q}(s+1)(\mathbf{v}^{i}(s+1))) \nonumber \\
&& \leq  \mathcal{J}(s) + \sum_{i=1}^{N}(\lambda_4 (\mathbf{v}^{i}(s))^\top\mathbf{Q}(s)(\mathbf{v}^{i}(s))) 
\end{eqnarray}
where $\mathcal{J}(s)=\Vert r^{*}(s) - (\mathbf{o}_t^i)^{\top}\mathbf{v}^{i*}(s) {r^{i}(s)} \Vert^{2}_{2}$.

After substituting the definition $\mathbf{Q}$ in Eq. (\ref{eq:proof2}), we obtain
\begin{eqnarray}\label{eq:proof3}
&&\mathcal{J}(s+1)+ (\lambda_4\dfrac{\|\mathbf{V}(s+1)\|_E^2}{2\|\mathbf{V}(s)\|_E}) \nonumber\\
&&\leq \mathcal{J}(s)+ (\lambda_4\dfrac{\|\mathbf{V}(s)\|_E^2}{2\|\mathbf{V}(s)\|_E})
\end{eqnarray}
From Corollary \ref{corollary1}, for the weight matrix $\mathbf{V}$ we have: 
\begin{eqnarray}\label{eq:proof4}
&&\Bigg({\|\mathbf{V}(s+1)\|_E} - {\dfrac{\|\mathbf{V}(s+1)\|_E^2}{2\|\mathbf{V}(s)\|_E}}\Bigg)\nonumber\\ &&\leq \Bigg({\|\mathbf{V}(s)\|_E} - {\dfrac{\|\mathbf{V}(s)\|_E^2}{2\|\mathbf{V}(s)\|_E}}\Bigg).
\end{eqnarray}
Adding Eq. (\ref{eq:proof3}) and (\ref{eq:proof4}) on both sides, we have
\begin{eqnarray}\label{eq:proof6}
&&\mathcal{J}(s+1)+ \lambda_4{\|\mathbf{V}(s+1)\|_E}\nonumber\\
&&\leq \mathcal{J}(s)+\lambda_4{\|\mathbf{V}(s)\|_E}
\end{eqnarray}
Eq. (\ref{eq:proof6}) implies that the updated value of weight matrix $\mathbf{V}$, decreases the value of the objective function with each iteration.
As the negotiation problem in Eq. (4) (in the main paper) is convex, Algorithm 1 (in the main paper) converges to the global optimal solution. 
Furthermore, during each time step of execution, we start with near-optimal $\mathbf{V}$ from previous time steps and as the objective is convex, Algorithm 1 converges faster than when starting from initial conditions, i.e., $\mathbf{V}$ as a zero matrix. 

\end{proof}

\section{Experimental Details}\label{sec:experimental_details}

We use a Clearpath Husky ground robot for our field experiments to demonstrate the negotiation capability during terrain navigation. In addition to the Intel Realsense D435 color camera, an Ouster OS1-64 LiDAR, GPS, Microstrain 3DM-GX5-25 IMU, and wheel odometers, the robot is also equipped with a 4.3 GHz i7 CPU with 16GB RAM and Nvidia 1660Ti GPU. 

For collecting the training data, a human expert demonstrates robot driving over simple terrains of short grass, medium-sized rocks, large-sized rocks, gravels, and concrete while following one of the following five navigational policies:
\begin{itemize}
    \item \emph{Maximum Speed:} When following this navigational policy, the human expert drives the husky robot with the maximum traversal speed irrespective of the terrain. 
    \item \emph{Obstacle Avoidance:} While following this policy, the expert demonstrates the robot to maneuver by driving around the obstacles and avoids collision. 
    \item \emph{Minimum Steering:} For this policy, the expert drives the robot with limited steering. The linear velocity is fixed during navigation. To perform obstacle avoidance, the robot turns from farther distances instead of making short and acute turns.
    \item \emph{Adaptive Maneuvers:}  While following this policy, the expert varies the robot's speed with each terrain to reduce the jerkiness of the robot. Specifically, the expert uses high speeds maneuvers in short-grass and concrete terrains but slower speeds in the terrains of medium rocks and gravels and the slowest in the terrain of large rocks. 
    \item \emph{No Navigational Bias:} When following this policy, the expert demonstrates navigation in various scenarios without particular policy bias, i.e., without following particular navigational policies.
\end{itemize}

For each policy, the robot is driven on all five terrains for an hour, which nearly equals 108000 distinctive terrain observations and the corresponding sequence of robot navigational behaviors and states for each navigational policy.
No further pre-processing is performed on the collected data. We use this data to learn optimal $\pi^{i}$, $i=1,\dots, N$ and $\mathbf{V}$ during training. We learn these parameters for different value of hyper-parameters to NAUTS approach, i.e., $\lambda_{1}$, $\lambda_{2}$, $\lambda_{3}$, $\lambda_{4}$ and $T$. The combination of these hyper-parameters that results in the best performance of NAUTS during testing are then used for our experiments. In our case, the optimal performance of NAUTS is obtained at  
 $\lambda_{1}=0.1, \lambda_{2}=10$, $\lambda_{3}=1$ and $\lambda_{4}=0.1$ for $T=9$.

\end{document}